
\typeout{Implementing Derivations of Definite Logic Programs with Self-Attention Networks}


\documentclass{article}
\pdfpagewidth=8.5in
\pdfpageheight=11in

\usepackage{kr}

\usepackage{times}
\usepackage{url}
\usepackage[hidelinks]{hyperref}
\usepackage[utf8]{inputenc}
\usepackage[small]{caption}
\usepackage{graphicx}
\usepackage{amsmath}
\usepackage{amsfonts}
\usepackage{amsthm}
\usepackage{bm}
\usepackage{booktabs}
\usepackage{algorithm}
\usepackage{algorithmic}
\urlstyle{same}



\newtheorem{prop}{Proposition}




\title{Implementing Derivations of Definite Logic Programs\\ with Self-Attention Networks}

\author{
Phan Thi Thanh Thuy$^{1}$
\and 
Akihiro Yamamoto$^{1, 2}$
\affiliations
$^1$Graduate School of Informatics\\
$^2$Center for Innovative Research and Education in Data Science\\
Kyoto University\\
\emails
thuy@iip.ist.i.kyoto-u.ac.jp,
akihiro@i.kyoto-u.ac.jp
}

\begin{document}

\maketitle

\begin{abstract}
In this paper we propose that a restricted version of logical inference can be implemented with self-attention networks.
We are aiming at showing that LLMs (Large Language Models) constructed with
transformer networks can make logical inferences.  We would reveal the potential of LLMs
by analyzing self-attention networks, which are main components of transformer networks. 
Our approach is not based on semantics of natural languages but operations of logical inference. 
We show that
hierarchical constructions of self-attention networks with feed forward networks (FFNs)
can implement top-down derivations for a class of logical formulae. 
We also show bottom-up derivations are also implemented for the same class.
We believe that our results show that LLMs implicitly have the power of logical inference.
\end{abstract}

\section{Introduction}
Large Language Models (LLMs) are giving strong impacts to our everyday life. 
Many people begin to make use of them in various manners and to expect that more power is given to them. 
An example of such power is logical inference. Some say that LLMs can make
logical inference, and make discussions on the semantical correctness of outputs made by LLMs, where semantics are meanings of sentences in natural languages.
Referring the theory of mathematical logic, the correctness of logical inference 
should be supported not only semantical manners but also operational. 
Operations of logical inference are methods for deriving conclusions 
from assumptions and showing the truth of sentences based on them.
We take an operational approach
towards showing the potential of logical inference in LLMs.
More precisely we analyze the transformer networks, which are known as the fundamental mechanism of major LLMs, in particular, self-attention networks
which are main components of transformers~\cite{DBLP:conf/nips/VaswaniSPUJGKP17}.

As logical inference mechanisms we employ top-down derivations for definite logic programs and queries. 
On the relation between bottom-up derivations of 
definite logic programs and neural networks,
a stream of research starting with~\cite{DBLP:conf/ksem/SakamaIS17}
has been made
\cite{DBLP:conf/miwai/NguyenSSI18}
\cite{DBLP:conf/ilp/AspisBR18}
\cite{DBLP:journals/logcom/NguyenSSI21}
\cite{DBLP:journals/amai/SakamaIS21}.
Each of these presents a method to represent a definite logic program
with a matrix so that matrix multiplication plus some additional operation
corresponds to bottom-up derivation.
We first show that top-down derivations can be implemented by a type of self-attention networks.
Also we show that the bottom-up derivation treated in the previous research 
can be implemented by another type of self-attention networks.

A self-attention network takes three inputs: a vector representing a query, a matrix 
representing a set of keys, and a matrix representing a set of values.
These inputs remind us the operations of derivation made of a query, the head of a definite clause, and its body.
Our fundamental idea is to make correspondence between the inputs of self-attention networks and the three operations of derivation. 
We also employ the hardmax function instead of the softmax function
used in self-attention networks. This is from the previous research on the analysis of 
self-attention networks on the viewpoints of acceptors of formal languages
\cite{DBLP:journals/tacl/Hahn20}
\cite{DBLP:conf/acl/YaoPPN20}
\cite{DBLP:journals/jmlr/PerezBM21}.

\section{Preliminaries}
\subsection{Self-Attention Networks}
Following \cite{DBLP:conf/nips/VaswaniSPUJGKP17} the encoder part of a transformer is constructed of
a position encoder of the inputs and $N$ layers of neural networks following the position encoder.
Each layer consists of two components,  a self-attention network and a feed-forward network (FFN). 
In this paper we do not use the position encoder. 
The self-attention network in \cite{DBLP:conf/nips/VaswaniSPUJGKP17} takes a set of queries,
a set of keys, and a set of values as its inputs and outputs a new set of queries.
All queries, keys, and values are vectors. 
Following the original paper, let $\bm{q}^k$ be a {\em raw-vector} for a query, and 
let 
\[
K= 
\left ( 
\begin{array}{c}
\bm{k}_1\\
\bm{k}_2\\
\vdots\\
\bm{k}_{m}
\end{array}
\right )
\mbox{ and }
V= 
\left ( 
\begin{array}{c}
\bm{v}_1\\
\bm{v}_2\\
\vdots\\
\bm{v}_{m}
\end{array}
\right )
\] be respectively arrays for a set of keys and a set of values. 
The function of the $k$-th layer of a self-attention network is 
\begin{eqnarray}
\label{attention}
\mbox{Attention}(\bm{q}^k, K, V) &=& \mbox{softmax}\left ( \frac{\bm{q}^kK^{\top}}{\sqrt{d}} \right ) V,
\end{eqnarray}
where $d$ is the dimension of vectors for queries and keys.
In our discussion we omit the normalization with $\sqrt{d}$ and 
replace the softmax function with the hardmax function as in the analysis of 
self-attention networks with traditional theories of formal languages
\cite{DBLP:journals/tacl/Hahn20,DBLP:conf/acl/YaoPPN20}.
The hardmax function is defined as
\begin{eqnarray*}
\mbox{hardmax} (x_1,\ldots, x_m) &=& (y_1,\ldots, y_m), 
\end{eqnarray*}
where 
\begin{eqnarray*}
y_i &=& \left \{ 
\begin{array}{ll}
\frac{1}{M} & \mbox{ if $x_i$ is a maximum of $(x_1,\ldots, x_m)$},\\
0 & \mbox{ otherwise}
\end{array}
\right .
\end{eqnarray*}
and $M$ is the number of maximums in $(x_1,\ldots, x_m)$.
Then the self-attention function (\ref{attention}) is represented as
\begin{eqnarray*}
\bm{a}^k
&=&\mbox{Attention}(\bm{q}^k, K, V) =
\sum_{j=1}^{m}s_{j}\bm{v}_j,\,\,\, \mbox{ where} \\
&&(s_{1}, \ldots, s_{m}) = \mbox{hardmax}((\langle \bm{q}^k, \bm{k}_1\rangle,\ldots, \langle \bm{q}^k, \bm{k}_m\rangle)).
\end{eqnarray*}
Let the FFN following the self-attention implement a function $f$. 
The output of the $k$-th layer is 
\[
\bm{q}^{k+1}=f(\bm{a}^k),
\]
and this is passed to the $(k+1)$-th layer as its input.

\subsection{Definite Logic Programs and Derivations}
Every formula in porpositional logic consist of propositional variables and logical connectives.
In our case we use two logical connectives: $\wedge$ meaning ``AND'' and $\leftarrow$ meaning 
``IF''. 
For example, $p\leftarrow q\wedge r$ is interpreted as ``A proposition $p$ holds if both $q$ and $r$ hold.''
In this paper we call every propositional variable a {\em propositional symbol\/}, or simply, a {\em symbol.\/}

We give a simple example of logical formulae which we treat in our discussion. 
Let the set of propositional symbols be $p$, $q$, $r$, $s$, $t$, $u$, and $w$. 
We prepare a special symbol $\top$, which means  ``TRUE'', and
$\bot$, which means ``FALSE''. 
We say a logical formula is a {\em conjunction} or a {\em query} if it contains a single symbol or multiple symbols
connected with $\wedge$.  For example,
\[ 
p\wedge q \wedge r,\,\,\, p, \,\,\,\mbox{ and } \top 
\]
are conjunctions, and also called queries. 
We say a logical formula is a  {\em definite clause} if it contains a connective $\leftarrow$ with a single symbol 
on the left-hand side and a conjunction on the right-hand side.  For  example, 
\[
\begin{array}{l}
p \leftarrow q\wedge r\\
q \leftarrow s\\
r \leftarrow s\wedge t \\
s \leftarrow u\\
t \leftarrow \top\\
u \leftarrow \top\\
w \leftarrow \bot
\end{array}
\]
are definite clauses.
The lhs of a definite clause is 
called its {\em head} and the conjunction of its rhs is called its {\em body}.

We explain {\em top-down derivations} with a simple example. 
Let $P$ denote  the set of definite clauses above. 
A top-down derivation starts with a query.
Assume that a query consisting of one symbol $p$ is given. 
Then a definite clause in $P$ whose head matches with $p$ is searched.
In this case $p \leftarrow q \wedge r$ is found. Then $p$ in the query
is replaced with the body of the definite clause, and a goal clause
\[
q\wedge r
\]
is derived. Next a definite clause whose head matches $q$ and a clause whose head matches $r$
are searched, and a query
\[
s\wedge s \wedge t
\]
is derived. By the idempotent property of $\wedge$, the query is simplified into
\[
s \wedge t.
\]
In repetition of the same operation we eventually obtain a derivation sequence illustrated in Fig.~1.
The last query is simplified into $\top$, which means ``The first query $p$ is proved.''
\begin{figure}
\[
\begin{array}{c}
p\\
| \\
q\wedge r \\
| \\
s\wedge s \wedge t \equiv s \wedge t\\
|\\
u \wedge \top\\
|\\
\top \wedge \top \equiv \top
\end{array}
\]
\caption{A successful top-down derivation}
\end{figure}

We give formal definitions. 
Let $\Pi$ be a finite set of propositional symbols.
A {\em conjunction} is a formula of the form 
\begin{eqnarray}
\label{goal}
q_1\wedge\ldots \wedge q_n \hspace{1cm}(n\geq 1), 
\end{eqnarray}
where $q_1$, $\ldots$, $q_n$ $\in \Pi \cup \{ \top,\, \bot \}$.
A conjunction is also called a {\em query}.
Since we employing propositional logic, we assume that 
$q_1$,$\ldots$, and $q_n$ are mutually different.
A {\em defnite clause} $C$ is a formula of the form
\begin{eqnarray*}
p \leftarrow q_1\wedge q_2 \wedge \ldots \wedge q_n \hspace{1cm}(n\geq 1)
\end{eqnarray*}
such that $p$, $q_1$,$\ldots$, $q_n$ $\in \Pi$, 
\begin{eqnarray*}
p \leftarrow \top 
\end{eqnarray*}
or
\begin{eqnarray}
\label{fail}
p \leftarrow \bot
\end{eqnarray}
with $p\in \Pi$.
The proposition $p$ is called the {\em head} of the clause and
denoted by $\mathrm{head}(C)$. The conjunction 
$q_1\wedge q_2 \wedge \ldots \wedge q_n$, $\top$, or $\bot$ is called its {\em body}
and denoted by $\mathrm{body}(C)$.
A finite set of definite clauses is called a {\em definite program}.
We assume that every program must satisfy the restriction that no pair of definite clauses
in the program share their head. Such a program is called an {\em SD-program} \cite{DBLP:conf/miwai/NguyenSSI18}
\footnote{Precisely speaking,  formulae of the form (\ref{fail}) are not allowed in \cite{DBLP:conf/miwai/NguyenSSI18}}.

Given an SD-program $P$, a {\em one-step top-down derivation} of a 
conjunction as a query (\ref{goal}) 
is to find, for each $i$ $(0< i \leq m)$, a definite clause in $P$ of the form 
\begin{eqnarray*}
p_i\leftarrow q_{i1}\wedge\ldots\wedge q_{i n_i},
\end{eqnarray*}
to make a query
\begin{eqnarray*}
q_{11}\wedge\ldots\wedge q_{1 n_1}\wedge\ldots\wedge q_{m1}\wedge\ldots\wedge q_{m n_{i_m}},
\end{eqnarray*}
and to simplify this query
by duplicating propositional symbols into one so that
the query consists of mutually different symbols. 
A {\em top-down derivation} of a query is a finite repetition of  one-step derivations.
If a conjunction consisting only of $\top$ is obtained, we say that the top-down derivation is {\em successful}.
It is {\em failed\/} if a query contains $\bot$ is obtained. 

In the traditional framework of resolution principle \cite{chang-lee}\cite{lloyd}, 
making top-down derivations would be called linear resolution with negating the
query (\ref{goal}) and deriving ``contradiction''.
A problem appears in our discussion in the treatment of propositional symbols which do not appear any definite clauses.  To such a symbol appears in a query, the resolution operation cannot be applied but no method is provided to present the situation explicitly. 
In our discussion we would represent the failure of derivation with the clause of the form (\ref{fail}) and therefore would not be in the framework of resolution, but in the framework of top-down and backward proving of conjunctions.

Note that the previous papers listed in Section~1 treat bottom-up derivations
with the $T_P$ operator \cite{lloyd} and regard
every conjunction without $\top$ and $\bot$ an {\em interpretation}. 

\section{Implementing Top-Down Derivations with Self-Attention Networks}
First we give an illustration with the previous example. 
From each definite clause $C$ in $P$ we make two vectors $\bm{h}_C$ and $\bm{b}_C$
with a vector of 9 dimensions.
Each dimension 
corresponds to $p$, $q$, $r$, $s$, $t$, $u$, $w$, $\top$, and  $\bot$.
The vector $\bm{h}$ shows the symbol appearing in the head, and 
$\bm{b}$ appearing in the body. 
Clearly every $\bm{h}$ is a unit vector.
From the first definite clause in $P$ we get two vectors
\[
\begin{array}{lcll}
&&\ p\ \ q \ \ r \ \ s \ \ t \ \ u \ \ w \ \top\, \bot\\
\bm{h}_C&=&(1, 0, 0, 0, 0, 0, 0, 0, 0)& \mbox{and}\\
\bm{b}_C&=&(0, 1, 1, 0, 0, 0, 0, 0, 0).
\end{array}
\]
Since $P$ is an SD-program, we can indicate these vectors with  $p=\mathrm{head}(C)$ and
write $\bm{h}_p$ and $\bm{b}_p$ instead of $\bm{h}_C$ and $\bm{b}_C$.
Additionally we prepare two vectors for each of $\top$ and $\bot$ as follows:
\begin{eqnarray*}
&& \, p\ \ q \ \ r \ \ s \ \ t \ \ u \ \ w\,\top\, \bot\\
\bm{h}_\top & = & (0, 0, 0, 0, 0, 0, 0, 1, 0),\\
\bm{b}_\top & = & (0, 0, 0, 0, 0, 0, 0, 1, 0),\\
\bm{h}_\bot & = & (0, 0, 0, 0, 0, 0, 0, 0, 1), \mbox{ and}\\
\bm{b}_\bot & = & (0, 0, 0, 0, 0, 0, 0, 0, 1).
\end{eqnarray*}

From vectors $\bm{h}_p$, $\bm{h}_q$,$\ldots$, $\bm{h}_\top$ we get an identity matrix $I_9$
of 9-dimension.
In implementing a one-step top-down derivation with a self-attention network
we substitute $I_9$ to the key matrix $K$ in (\ref{attention}).
To the value matrix $V$ in (\ref{attention}), we substitute
\[
B_P = 
\begin{array}{c}
\begin{array}{c}
\\
\end{array}\\
\left ( 
\begin{array}{c}
\bm{b}_p\\
\bm{b}_q\\
\bm{b}_r\\
\bm{b}_s\\
\bm{b}_t\\
\bm{b}_u\\
\bm{b}_w\\
\bm{b}_\top\\
\bm{b}_\bot
\end{array}
\right )
\end{array}
=
\begin{array}{c}
\begin{array}{ccccccccc}
p& q & r& s & t & u& w & \top& \bot\\
\end{array}\\
\left (
\begin{array}{ccccccccc}
0 & 1 & 1 & 0 &0 & 0 & 0 & 0 & 0\\
0 & 0 & 0 & 1 & 0 & 0 & 0 &0 & 0 \\
0 & 0 & 0 & 1 & 1 & 0 & 0 &0 & 0\\
0 & 0 & 0 & 0 & 0 & 1 & 0 &0 & 0 \\
0 & 0& 0 & 0 & 0 & 0 & 0 & 1 & 0\\
0 & 0& 0 & 0 & 0 & 0 & 0 & 1 & 0\\
0 & 0& 0 & 0 & 0 & 0 & 0 & 0 & 1\\
0 & 0& 0 & 0 & 0 & 0 & 0 &1 & 0\\
0 & 0 & 0 & 0 & 0 & 0 & 0 & 0 & 1
\end{array}
\right )
\end{array}
\]
As the function implemented by the FFN following the self-attention network, we use 
the {\em  dimension-wise Heaviside function\/}
\begin{eqnarray*}
H((v_1, v_2, \ldots)) = (H_0(v_1), H_0(v_2), \dots),
\end{eqnarray*}
where $H_0$ is a Heaviside function defined as
\begin{eqnarray*}
H_0(x)& =& \left \{ \begin{array}{ll}
1 & \mbox{ if $x> 0$,}\\
0 & \mbox{ otherwise.}
\end{array} \right . 
\end{eqnarray*}

The top-down derivation illustrated in the last section is represented as follows.
First we represent the query consisting only of $p$ as a vector 
\[
\bm{q}^1 = (1, 0, 0, 0, 0, 0, 0, 0, 0)
\]
and  put this into the first layer as its input.
The computation of $(s_1,\ldots,s_9)$ is 
\[
\mathrm{hardmax}(\langle \bm{q}^1, \bm{k}_1\rangle,..., \langle \bm{q}^1, \bm{k}_9\rangle)=(1, 0, 0, 0, 0, 0, 0, 0, 0)
\]
and so the output of the self-attention network is 
\[
\bm{a}^1 = (0, 1, 1, 0, 0, 0, 0, 0, 0).
\]
Next we put $\bm{a}^1$ into the FFN following the self-attention network and put the result $\bm{q}^2 = H(\bm{a}^1)$
into the second layer as its input, which represents $q\wedge r$. Then
\[
\mathrm{hardmax}(\langle \bm{q}^2, \bm{k}_1\rangle,..., \langle \bm{q}^2, \bm{k}_9\rangle)=(0, \frac{\,1\,}{2}, \frac{\,1\, }{2}, 0, 0, 0, 0, 0, 0)
\]
and the result of the self-attention network is 
\begin{eqnarray*}
\bm{a}^2 &=& (0, 0, 0, \frac{\,1\,}{2}+\frac{\,1\,}{2}, \frac{\,1\,}{2}, 0, 0, 0, 0)\\
&=& (0,0,0,1,\frac{\,1\,}{2},0,0, 0, 0).
\end{eqnarray*}
The output of the layer is 
\[
\bm{q}^3 = H(\bm{a}^2) = (0,0,0,1,1,0,0,0, 0),
\]
which represents $s\wedge t$.
In the same manner, by letting $\bm{q}^{k+1} = H(\bm{a}^{k})$, we get
\[
\mathrm{hardmax}(\langle \bm{q}^3, \bm{k}_1\rangle,...,\langle \bm{q}^3, \bm{k}_9\rangle ) = (0, 0, 0, \frac{\,1\,}{2}, \frac{\,1\,}{2}, 0, 0, 0, 0),\]
\[
\bm{a}^3 = (0,0,0,0, 0,\frac{\,1\,}{2},0, \frac{\,1\,}{2},0),
\]
\[
\bm{q}^4 = H(\bm{a}^3 ) = (0,0, 0,0, 0,1,0, 1,0),
\]
which represents $u \wedge \top$,
\[
\mathrm{hardmax}(\langle \bm{q}^4, \bm{k}_1\rangle,..., \langle \bm{q}^4, \bm{k}_9\rangle) = (0,0,0, 0, 0, \frac{\,1\,}{2},0,\frac{\,1\,}{2},0),
\]
\[
\bm{a}^4 = (0,0, 0,0,0,0,0, 1,0),
\]
\[
\bm{q}^5 = H(\bm{a}^4 ) =(0,0, 0,0,0,0,0, 1,0),
\]
which represents $ \top$,
and the top-down derivation is successful. 

We give general definitions.
Let $\Pi = \{p_1, p_2, \ldots, p_{N} \}$ be the set of propositional symbols. 
For the convenience, we let $p_{N+1}=\top$, and $p_{N+2}=\bot$. 
We consider vectors in $\mathbb{R}^{N+2}$. 

For a definite clause $C$ we define a {\em head vector} $\bm{h}_C = (h_1, h_2,\ldots, h_{N+2})$
and a {\em body vector} $\bm{b}_C= (b_1, b_2,\ldots, b_{N+2})$ so that 
\begin{eqnarray*}
{h}_i &=&
\left \{
\begin{array}{ll}
1& \mbox{ if $\mathrm{head}(C)=p_i$,}\\
0& \mbox{ otherwise, }
\end{array}
\right .\\
{b}_i &=& 
\left \{
\begin{array}{ll}
1& \mbox{ if  $p_i$ appears in $\mathrm{body}(C)$,}\\
0& \mbox{ otherwise.}
\end{array}
\right .
\end{eqnarray*}
We regard a definite program $P$ as a sequence $C_1$, $C_2$, $\ldots$, $C_{N+2}$ of rules in it.

We define the head matrix $H_P$ and the body matrix $B_P$ as
\[
H_P= 
\left ( 
\begin{array}{c}
\bm{h}_1\\
\bm{h}_2\\
\vdots\\
\bm{h}_{N}\\
\bm{h}_\top\\
\bm{h}_\bot
\end{array}
\right ) \mbox{ and }
B_P= 
\left ( 
\begin{array}{c}
\bm{b}_1\\
\bm{b}_2\\
\vdots\\
\bm{b}_{N}\\
\bm{b}_\top\\
\bm{b}_\bot
\end{array}
\right ).
\]
It is clear that, for an SD-program $P$, $H_P$ is an identical matrix $I_{N+2}$ without loss of generality.
A conjunction for a conjunction, which is used as a query in top-down derivation is represented 
as a vector in the same manner of the body of a definite clause.
Then the one-step top-down derivation is represent with the self-attention function.
\begin{prop}
Assume that the self-attention 
function employs the  hardmax function and is followed by
 the dimension-wise Heaviside function $H$. 
Let $P$ be an SD-program and
$\bm{q}$ is a vector representing a query $Q$.
Then the output  $H(\mbox{\em Attention}(\bm{q}, H_P, B_P))$
represents
the query $Q'$ obtained by one-step top-down derivation of $Q$.
\end{prop}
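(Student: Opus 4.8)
The plan is to unwind the definition of $\mathrm{Attention}$ one factor at a time and observe that the Heaviside post\-processing erases the fractional weights produced by hardmax, so that the only surviving information is the \emph{support} of a sum of body vectors, which is exactly the symbol set of $Q'$. Concretely I would proceed: (i) exploit $H_P = I_{N+2}$ to reduce the inner products to coordinates of $\bm{q}$; (ii) read off the hardmax weights; (iii) identify the support of the resulting attention output; (iv) compare that support with the symbol set of the one-step derivation $Q'$.

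First I would write $Q = p_{i_1}\wedge\ldots\wedge p_{i_k}$ with the indices $i_1,\ldots,i_k$ mutually distinct, allowing the special symbols $\top = p_{N+1}$ and $\bot = p_{N+2}$ to occur among them, so that $\bm{q}$ is the $0/1$ vector with support $\{i_1,\ldots,i_k\}$ and $k\ge 1$. Since $P$ is an SD\-program we may take $H_P = I_{N+2}$, hence $\bm{q}H_P^{\top} = \bm{q}$ and $\langle\bm{q},\bm{h}_j\rangle$ equals the $j$-th component of $\bm{q}$, i.e.\ $1$ if $p_j$ occurs in $Q$ and $0$ otherwise. The maximum of these $N+2$ numbers is therefore $1$, attained exactly $k$ times, so by the definition of hardmax $s_j = \tfrac1k$ when $p_j$ occurs in $Q$ and $s_j = 0$ otherwise, which gives
\[
\bm{a} \;=\; \mathrm{Attention}(\bm{q},H_P,B_P) \;=\; \frac{1}{k}\sum_{j:\, p_j\ \text{in}\ Q}\bm{b}_j .
\]

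Next I would determine the support of $\bm{a}$. Each $\bm{b}_j$ has nonnegative entries, so the $i$-th entry of this sum is strictly positive precisely when $p_i$ appears in $\mathrm{body}(C_j)$ for at least one clause $C_j$ whose head $p_j$ occurs in $Q$ — and here $C_{N+1}$, $C_{N+2}$ act as the auxiliary clauses $\top\leftarrow\top$ and $\bot\leftarrow\bot$, which is exactly how the derivation carries $\top$ and $\bot$ forward. Applying $H$ yields the $0/1$ vector with support $\{\,i : p_i\in\mathrm{body}(C_j)\ \text{for some}\ p_j\ \text{in}\ Q\,\}$. On the other side, by the definition of a one\-step top\-down derivation, $Q'$ is obtained by replacing each symbol $p_j$ of $Q$ by $\mathrm{body}(C_j)$, conjoining the results, and removing duplicates; thus a symbol $p_i$ occurs in $Q'$ iff $p_i\in\mathrm{body}(C_j)$ for some $p_j$ in $Q$ — the same set just computed. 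Therefore $H(\bm{a})$ is the vector representing $Q'$.

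The genuinely delicate points here are bookkeeping rather than mathematical. One is making the ``without loss of generality $H_P = I_{N+2}$'' convention precise: it must assign to \emph{every} symbol a defining clause (possibly $p\leftarrow\bot$) so that every $\bm{b}_j$ entering the sum above is well defined, and likewise provide the clauses $\top\leftarrow\top$, $\bot\leftarrow\bot$ so that $\top$ and $\bot$ occurring inside $Q$ are treated consistently by the self\-attention map and by the logical one\-step derivation. The other is simply noting that the hardmax weights $\tfrac1k$ are strictly positive, so the Heaviside step is insensitive to the value of $k$; no quantitative estimate is required, and the argument is complete once these conventions are fixed.
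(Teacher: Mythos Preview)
Your proposal is correct and follows essentially the same approach as the paper's proof: use $H_P=I_{N+2}$ to identify the inner products with the coordinates of $\bm{q}$, read off the hardmax weights, and then observe that applying $H$ to the resulting combination of body vectors yields the indicator vector of the symbol set of $Q'$. Your write-up is in fact more explicit than the paper's (you actually compute the weights $1/k$ and spell out the support argument and the bookkeeping conventions for $\top$, $\bot$, and symbols without a defining clause), but the underlying idea is identical.
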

\begin{proof}
For the head matrix $H_P=I_{N+2}$, $\langle \bm{q}, \bm{h}_i \rangle = 1$ if and only if 
 $p_i$ appears in $Q$. For all other $p_j$, 
$\langle \bm{q}, \bm{h}_j \rangle = 0$. 
This computation works as the selection of the definite clause which can be used 
for the top-down derivation.
Therefore the output of the self-attention function
\begin{eqnarray*}
\bm{a} &=& \mbox{hardmax}((\langle \bm{q}^k, \bm{h}_1\rangle,\ldots, \langle \bm{q}^k, \bm{h}_{N+2}\rangle))B_P\\
&=& (a_1,\ldots, a_{N+2})
\end{eqnarray*}
must satisfy that  $a_i= 0$ if and only if 
 $p_i$ does NOT appear in  $Q'$.
By applying $H(\,\cdot\,)$ we obtain the vector representing $Q$.
\end{proof}

\section{Implementing Bottom-Up Derivation with Self-Attention Network}
In this section 
we explain the method of implementing the bottom-up derivation of an SD-program proposed in~\cite{DBLP:conf/miwai/NguyenSSI18} with the self-attention netwroks.
For an SD-program $P$ a program matrix $M_P$ is defined in a slightly different manner of the body matrix $B_P$. 

First we have to note that they do not use the dimension of $\top$ or $\bot$ and therefore
vectors in ${\mathbb R}^N$ is used.

For a definite clause $C$ we define a {\em progam vector} $\bm{m}_C = (m_1, m_2, \ldots, m_{N})$
as
\begin{eqnarray*}
{m}_i &=& 
\left \{
\begin{array}{ll}
\frac{1}{M}& \mbox{ if $p_i$ appears in $\mathrm{body}(C)$,}\\
1 & \mbox{ if $\mathrm{body}(C)=\top$ and $\mathrm{head}(C)=p_i$,}\\
0& \mbox{ otherwise,}
\end{array}
\right . 
\end{eqnarray*}
where $M$ is the number of symbols in the body of the definite clause.
The {\em program matrix} $M_P$ is constructed by putting the program vectors vertically. 
For example, the program matrix of the SD-program presented in Section~2.2 is
\[
M_P = 
\begin{array}{c}
\begin{array}{c}
\\
\end{array}\\
\left ( 
\begin{array}{c}
\bm{m}_p\\
\bm{m}_q\\
\bm{m}_r\\
\bm{m}_s\\
\bm{m}_t\\
\bm{m}_u\\
\bm{m}_w
\end{array}
\right )
\end{array}
=
\begin{array}{c}
\begin{array}{ccccccc}
p\, & \,q\, &\, r \, &\, s \, &\, t \, & \,u \, &\,  w \\
\end{array}\\
\left (
\begin{array}{ccccccc}
0 & \frac{1}{\,2\,} &\frac{1}{\,2\,} & 0 &0 & 0 & 0 \\
0 & 0 & 0 & 1 & 0 & 0 & 0 \\
0 & 0 & 0 & \frac{1}{\,2\,} & \frac{1}{\,2\,} & 0 & 0\\
0 & 0 & 0 & 0 & 0 &  1 & 0 \\
0 & 0& 0 & 0 & 1 & 0 & 0 \\
0 & 0& 0 & 0 & 0 & 1 & 0 \\
0 & 0& 0 & 0 & 0 & 0 & 0 \\
\end{array}
\right )
\end{array}
\]

The vector $\bm{q}$ represents such an interpretation that the proposition $p$ is interpreted to be true
if and only if the dimension of $p$ of $\bm{q}$ is 1. 
Then the bottom-up derivation is represented with the attention function (\ref{attention})
by replacing the softmax function with the dimension-wise identity function and
letting $K=M_P$ and $V=H_P$. 

\section{Conclusion}
In this paper we show that top-down derivations from a conjunction of propositions as queries with
an SD-program are represented by self-attention networks followed by the FFNs representing the dimension-wise
Heaviside function. The self-attention network is modified by replacing the softmax function with the hardmax function.
We also show that bottom-up derivations from an interpretation with an SD-program are represented by the self-attention networks with the element-wise identity function as the substitution of the softmax function.
We believe that our results show that LLMs implicitly have the power of logical inference.

The previous work listed in Section~1 proposed that bottom-up derivations for several extensions of definite 
logic programs can be represented by operations of tensors.
We conjecture that our representation method could be modified for some of such extensions by using 
the multi-head attention networks~\cite{DBLP:conf/nips/VaswaniSPUJGKP17}.

The replacement of the softmax function with the hardmax is required by the point that we are based on the traditional binary-valued propositional logic.
The original definition of the self-attention networks employs the softmax function because LLMs works in probabilistic manners. 
Our future work includes some extensions of our discussion to probabilistic propositional logic so that we could show more potentials of practical uses of LLMs.

\section*{Acknowledgments}
This work is partly supported by ROIS NII Open Collaborative Research 2024-24S1202 and 
JSPS KAKENHI Grant Number JP20H0596.

\bibliographystyle{kr}
\bibliography{ThuyYamamoto241015}

\end{document}